\newtheorem{theorem}{Theorem}
\newtheorem{lemma}{Lemma}
\newtheorem{proof}{Proof}
\begin{document}

\title{Similarity-based Outlier Detection for Noisy Object Re-Identification\\Using Beta Mixtures}

\author{Waqar Ahmad\orcidlink{0009-0002-4985-6582}, 
    Evan Murphy\orcidlink{0009-0007-7121-2982}, 
    and Vladimir A. Krylov\orcidlink{0000-0002-9734-5974}, 
    {\it Senior Member, IEEE}
        % <-this % stops a space
\thanks{
 %ADAPT Research Center, 
 The authors are with the
 School of Mathematical Sciences,
 Dublin City University,  
 Dublin, Ireland.}}% <-this % stops a space
%\thanks{Manuscript received April 19, 2021; revised August 16, 2021.}}

% The paper headers
\markboth{}%
{...}

\maketitle

\begin{abstract}
Object re-identification (Re-ID) methods are highly sensitive to label noise, which typically leads to significant performance degradation. 
We address this challenge by reframing Re-ID as a supervised image similarity task and adopting a Siamese network architecture trained to capture discriminative pairwise relationships. Central to our approach is a novel statistical outlier detection (OD) framework, termed Beta-SOD (Beta mixture Similarity-based Outlier Detection), which models the distribution of cosine similarities between embedding pairs using a two-component Beta distribution mixture model. 
We establish a novel identifiability result for mixtures of two Beta distributions, ensuring that our learning task is well-posed.
The proposed OD step complements the Re-ID architecture combining binary cross-entropy, contrastive, and cosine embedding losses that jointly optimize feature-level similarity learning.
We demonstrate the effectiveness of Beta-SOD in de-noising and Re-ID tasks for person Re-ID, on CUHK03 and Market-1501 datasets, and vehicle Re-ID, on VeRi-776 dataset. Our method shows superior performance compared to the state-of-the-art methods across various noise levels (10-30\%), demonstrating both robustness and broad applicability in noisy Re-ID scenarios. The implementation of Beta-SOD is available at: \url{https://github.com/waqar3411/Beta-SOD}.
\end{abstract}

\begin{IEEEkeywords}
Object Re-Identification, image similarity, statistical filtering, Beta distribution mixtures, identifiability.
\end{IEEEkeywords}

\section{Introduction}

Object re-identification (Re-ID) is a fundamental task in computer vision that seeks to match instances of the same object across different cameras or viewpoints. Person Re-ID has been extensively studied, tackling pose variation, occlusion, illumination, and background clutter~\cite{36}. Equally,  vehicle Re-ID has gained prominence in intelligent transportation systems by confronting inter-class similarity, intra-class variability, and occlusion challenges~\cite{37}. Both subfields fundamentally aim to learn discriminative features that generalize across varied conditions. A further complication arises in practical systems from noisy labels: in object Re-ID, even small annotation errors due to mislabelled identities or detector misalignment can significantly distort learned representation boundaries. Indeed, recent work demonstrates that learning robust person Re-ID models under label noise poses a significant challenge, especially when each identity has limited samples, and naïve training can result in large degradation in accuracy and generalisability~\cite{22,23}.

To address the negative impact of label noise on deep Re-ID models, a variety of strategies have been developed in recent years. One line of work focuses on robust loss functions, since conventional objectives such as cross-entropy (CE) or mean squared error (MSE) are highly sensitive to mislabeled samples~\cite{1,2,3}. A second line of research assigns adaptive importance weights to training examples, down-weighting the contribution of noisy labels while emphasizing reliable ones~\cite{4,5,6}. Another family of approaches explicitly identifies mislabeled samples and either corrects them or removes them from training, thereby purifying the dataset~\cite{7,8,9}. Collectively, these techniques aim to mitigate the effects of corrupted annotations, and improve the stability and generalization of Re-ID systems under realistic conditions, with loss-function design playing a particularly central role.

Among these strategies, the choice of loss function is especially critical, as it directly determines how the model learns from both clean and noisy samples. Standard CE loss, while common, is highly sensitive to label noise because of its strong penalties on misclassifications, often leading Convolutional Neural Networks (CNNs) to overfit corrupted labels~\cite{10,11}. Similarly, contrastive loss (CL), commonly applied in Re-ID tasks to enforce distance-based similarity constraints, is vulnerable to noisy annotations since mislabeled pairs are penalized even when the underlying match is correct~\cite{12,13}. Alternatives such as cosine-embedding loss replace Euclidean distance with angular similarity, offering improved robustness in high-dimensional spaces, yet they too remain affected by label noise due to their dependence on angular separation~\cite{14,15}. To overcome these limitations, several robust loss formulations have been proposed, including generalized cross-entropy, bootstrapped loss, and noise-corrected contrastive objectives, which aim to reduce sensitivity to corrupted labels while preserving discriminative feature learning~\cite{16,17,18}. Nonetheless, loss-based remedies address the complexities of noisy Re-ID data only partially, motivating complementary approaches that filter or adjust corrupted samples through statistical modeling.

In our work we reformulate the Re-ID task as a binary image similarity problem by pairing images into similar and dissimilar pairs, and processing them through a Siamese network~\cite{melekhov2016siamese}. Cosine similarity scores generated by the network are modeled using an explicit statistical model defined as a two-component Beta distribution mixture, where one component represents clean image pairs similarity scores, and the other captures noisy or mislabeled ones. An Expectation-Maximization (EM)-based outlier detection (OD) algorithm identifies and filters noisy pairs. This OD step is alternated with network training every few epochs, allowing the model to iteratively refine the training data and improve task-specific representation learning. Multi-view pairing ensures that enough data is retained even after filtering, which is particularly useful in limited-data scenarios. To enhance learning, we employ a multi-loss strategy that enables the network to capture features beneficial for both similarity classification and Re-ID. The proposed architecture, termed Beta-SOD (Beta mixture Similarity-based Outlier Detection), is outlined in Fig~\ref{fig:proposed_method}.

At the heart of the proposed methodology lies statistical modeling for outlier detection. To express both the signal and noise, or {\it contamination}, components underlying cosine similarity scores, we introduce the use of the Beta distribution~\cite{gupta2004handbook}. This flexible two-parametric family is defined to model quantities restricted to the unit interval $[0,1]$. To establish the theoretical soundness of this formulation and justify the use of EM, we provide, to the best of our knowledge, the first proof of {\it identifiability} for the two-component Beta mixtures. This property means that the individual components can be uniquely recovered from the observed mixture. This result is novel and departs from established identifiability proofs for more classical Gaussian, Gamma~\cite{atienza2006new}, and Generalized Gamma mixtures~\cite{li2015unsupervised}, because in the Beta distribution case the property is violated on subsets of Lebesgue measure zero, which necessitates a distinct line of argument introduced here. Importantly, this violation poses no concern in practice, as the probability of encountering such pathological cases is zero.

The main contributions of this work are as follows:
\begin{itemize}
\item We reformulate the object Re-ID task as pairwise image similarity classification
and propose an explicit statistical OD technique, based
on modeling the cosine similarities as two-component Beta distribution mixtures, which is particularly relevant in the presence of noisy labels.

\item We establish a novel identifiability result for mixtures of two Beta distributions, providing the first theoretical guarantee that such models can be uniquely recovered from data with probability 1. This ensures that our proposed filtering framework is rigorously well-posed.

\item We employ three loss functions in our Siamese-based architecture, including binary cross-entropy, contrastive and cosine embedding losses, enabling the model to learn robust features suitable for both similarity classification and object Re-ID tasks.

\item Extensive experiments on benchmark Re-ID datasets, including Market1501, CUHK03, and VeRi-776, under varying noise levels 0-30\% demonstrate that our method consistently achieves state-of-the-art performance.

\end{itemize}

\begin{figure*}[!h]
    \centering
    \includegraphics[scale=0.34]{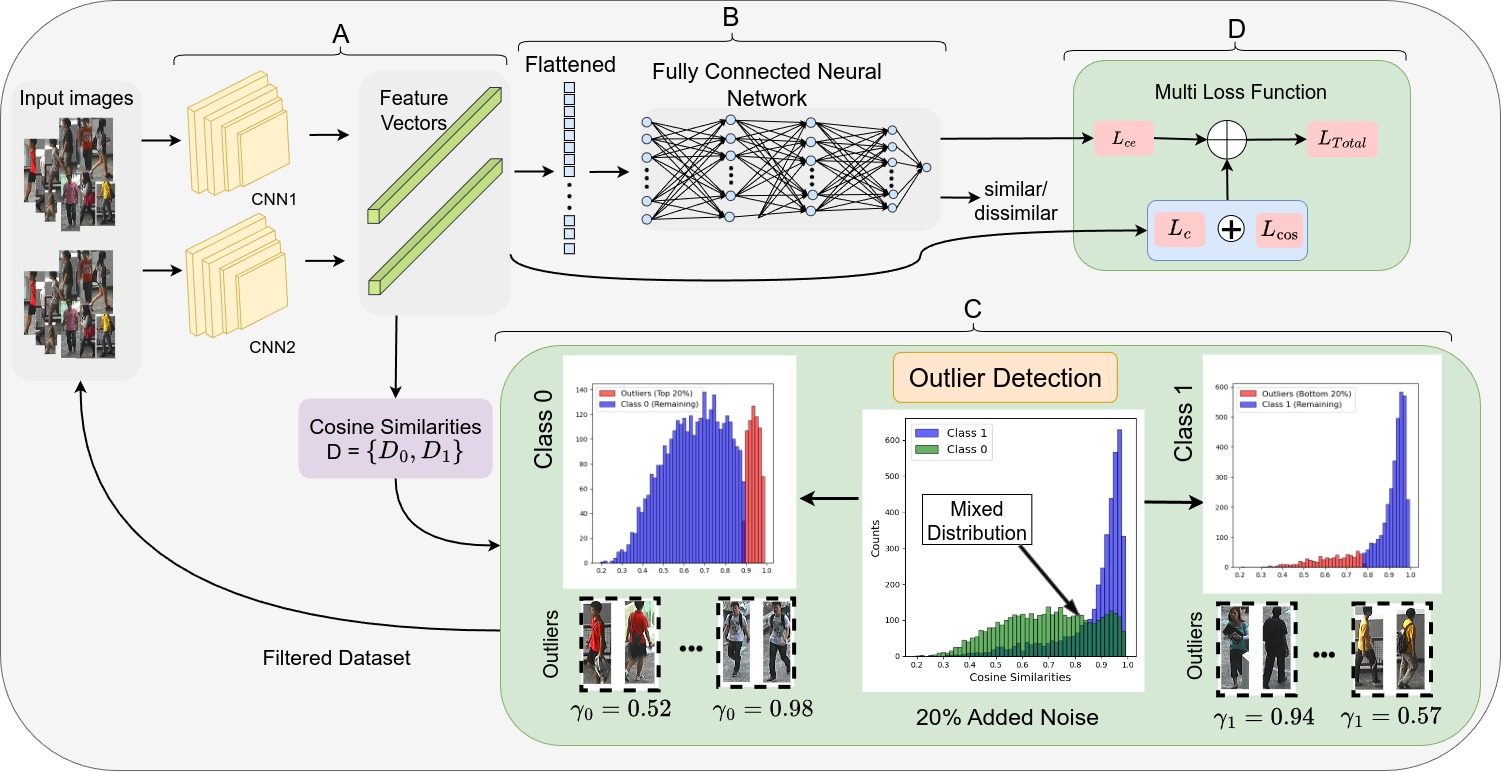}
    %\vspace{-.5\baselineskip}
    \caption{Proposed Beta-SOD Architecture: (A) The Siamese architecture extracts features from paired images. (B) Feature vectors are flattened and passed to a fully connected neural network (FCNN) for pair classification. (C) Cosine similarity is computed from the extracted features and processed by the OD module after every $K$ epochs. The identified outliers are are then filtered from the dataset. (D) The Siamese network and FCNN are trained jointly using the combination of three loss functions.}
    \label{fig:proposed_method}
\end{figure*}

%%%%%%%%%%%%%%%%%%%%%%%%%%%%%%%%%%%%%%%%%%%%%%%%%%%%
%%%%%%%%%%%%%%%%%%%%%%%%%%%%%%%%%%%%%%%%%%%%%%%%%%%%
%%%%%%%%%%%%%%%%%%%%%%%%%%%%%%%%%%%%%%%%%%%%%%%%%%%%

\section{Related work in object Re-ID}
\label{sec:SOA}

Supervised deep learning methods typically rely on large volumes of accurately labeled data, yet in real-world settings collecting such fine-grained person Re-ID annotations is costly and often impractical. Annotations are frequently corrupted by human error, limitations of labeling tools, or dataset re-purposing, where tracking outputs or estimated cross-camera view matches are reused across tasks, introducing identity inconsistencies and mislabeled similarity pairs~\cite{yu2024enhancing,35}. This phenomenon has been empirically documented to degrade performance even under moderate noise levels~\cite{22}. In unsupervised or domain-adaptive Re-ID pipelines, clustering-based pseudo-labels are widespread, but errors in these labels can mislead representation learning unless confidence or sample uncertainty is explicitly modeled~\cite{miao2024confidence}. While CNNs can still learn under moderate noise, even low rates of identity-level corruption severely impair feature discriminability and generalization in Re-ID tasks~\cite{22,23}.

In this work, we focus on the noisy label setting in person re-identification, where annotation errors caused by mislabeling, detector misalignments, or identity ambiguities can severely degrade model performance, particularly in datasets with few samples per identity. To combat this, TSNT~\cite{23} introduces a two-stage noise-tolerant paradigm that first self-refines presumably clean samples and then employs co-training with rectified cross-entropy and robust triplet losses to jointly suppress noisy labels. URNet~\cite{URNet}, though originally proposed for large-scale noisy web image classification, is relevant in this context as it adaptively reweights training instances to mitigate the effects of class imbalance, label noise, and dataset bias. PurifyNet~\cite{20} leverages a hard-aware instance reweighting strategy that gradually purifies noisy annotations while emphasizing challenging but reliable samples. DistributionNet~\cite{19} models feature uncertainty via per-image distributions to mitigate the impact of outliers, though its reliance on richer data per identity may limit generalization in low-sample settings. More recently, CORE~\cite{22} applies collaborative refining via online mutual learning between peer networks with selective consistency to correct labels dynamically during training. Complementing this, Noise-Aware Re-ID~\cite{21} estimates local uncertainty per sample, adaptively down-weighing unreliable data while preserving discriminative features. Additionally, KNN-based filtering techniques~\cite{galanakis2024noise} detect and remove suspicious samples by examining neighborhood consistency in feature space, offering a lightweight noise-robust mechanism.

While these methods target noisy annotations in person Re-ID, similar challenges arise in vehicle Re-ID, where viewpoint changes, domain variations, and annotation inconsistencies require specialized solutions. STE-VReID~\cite{34} proposes deep triplet embedding combined with optimized sampling strategies and achieves competitive performance with minimal complexity. To further strengthen feature learning, GS-TRE~\cite{31} introduces a group-sensitive triplet embedding that explicitly models intra-class variance, improving discrimination under diverse viewpoints. %VehicleNet~\cite{32} complements these metric-learning approaches by leveraging a large-scale web-collected dataset within a two-stage training pipeline to obtain robust and transferable representations. 
Moving beyond supervised paradigms, SGFD~\cite{33} proposes a self-supervised geometric feature discovery framework with interpretable attention, removing reliance on fine-grained labels while still achieving state-of-the-art results. More recently, GiT~\cite{git} integrates graph-based local correction with transformer modules to capture both discriminative details and contextual relations, while GLSD~\cite{GLSD} employs a multi-branch structure with global-local self-distillation to enhance representation quality through adaptive fusion. Extending this line of attention-driven methods, LSKA-ReID~\cite{LSKA-ReID} combines large separable kernel attention with hybrid channel attention to jointly model global structure and fine-grained details, achieving top performance on benchmarks such as VeRi-776~\cite{28}.

%%%%%%%%%%%%%%%%%%%%%%%%%%%%%%%%%%%%%%%%%%%%%%%%%%%%
%%%%%%%%%%%%%%%%%%%%%%%%%%%%%%%%%%%%%%%%%%%%%%%%%%%%
%%%%%%%%%%%%%%%%%%%%%%%%%%%%%%%%%%%%%%%%%%%%%%%%%%%%

\section{Outlier Detection (OD)}
\label{sec:OD}
Noisy labels degrade the learning capability of Re-ID models. From a statistical perspective, image pairs with incorrect or corrupted labels can be regarded as outliers within the dataset. In a typical Re-ID setup, image pairs are constructed and categorized as similar (class 1) if they belong to the same identity, or dissimilar (class 0) otherwise. Given a suitably trained Siamese architecture, similarity is quantified by computing the cosine similarity between the feature vectors extracted from the backbone architecture: 
$\text{sim}(x, y) ={x \cdot y}/(\|x\| \, \|y\|)$. These similarity scores naturally fall within the $[0,1]$ range and can therefore be effectively modeled using a two-parameter Beta distribution, defined by its density function:
\begin{equation}
    f(x; \alpha, \beta) = \frac{\Gamma(\alpha + \beta)}{\Gamma(\alpha)\Gamma(\beta)} x^{\alpha - 1} (1 - x)^{\beta - 1}, \; x\in[0,1],
    \label{eq:betapdf}
\end{equation}
where $\Gamma(\cdot)$ is the Gamma function, and $\alpha, \beta>0$ are model parameters.
The Beta distribution~\cite{gupta2004handbook} is widely used for modeling unknown quantities in the unit interval due to its flexibility in capturing diverse shapes, including uniform, skewed, U-shaped, or unimodal, thus making it well-suited for modeling the similarities. Importantly, this model allows us to avoid imposing unrealistic distribution shape assumptions, like symmetry (Gaussian), or exponential tail decay (Gamma).

In a clean consistent dataset, the histograms of the two classes are well-separated, with minimal overlap and peaks at the opposite ends of the $[0,1]$ interval. However, as can be observed in Fig.~\ref{fig:proposed_method}, the noisy label case presents a mixed pattern. The outliers skew the underlying distribution, causing its mean to shift towards the center of the unit interval. 

We propose to model a noisy distribution as a finite mixture of two Beta distribution components. Note that due to the similarity-based reframing of the Re-ID problem any image pair is coming from either similar or dissimilar class and hence any class histogram is a mix of the pure component plus a contamination component. We model the noisy data as
\begin{equation}
    F_{\text{observed}}(x) = \omega_0 f(x; \alpha_0,\beta_0) + \omega_1f(x; \alpha_1,\beta_1)
    \label{eq:betamix}
\end{equation}
where $(\alpha_0, \beta_0)$ and $(\alpha_1, \beta_1)$ are the parameters of the Beta distribution for class 0 (dissimilar) and class 1 (similar), respectively, and the mixing proportions $\omega_1,\omega_2>0$: $\omega_1+\omega_2=1$. Note that if Eq.~\ref{eq:betamix} models class 0 [class 1] then $\omega_1$ [$\omega_0$] represents the fraction of outliers.

To estimate such mixtures, we adopt the EM algorithm~\cite{dempster1977maximum}. The EM algorithm is particularly well-suited for estimating the parameters of  mixture distributions, where the component memberships of observations are unknown (latent). Direct maximum likelihood estimation is intractable due to the coupling of mixture weights and the Beta distribution parameters, but EM efficiently addresses this by iteratively estimating the posterior probabilities of component assignments (E-step) and updating the mixture weights and Beta parameters (M-step). This allows for stable convergence even in the presence of noisy or overlapping components, making EM a standard and principled approach for mixture modeling in such cases. Alternative approaches include simulation-based Bayesian approaches~\cite{diebolt1994estimation} and Stochastic EM, which are more costly computationally and unnecessary in the considered case, due to the observed convergence of the classical EM.

EM consists of iterating the following two steps applied to a Beta mixture model estimated on dataset $D$:

\noindent {\bf E-Step.}
The posterior probabilities for each sample $x_i$ are computed:
\begin{align}
\gamma_0(x_i) &= P(x_i \in \text{class } 0 \mid x_i; \omega, \alpha, \beta) \notag \\
 &= \frac{\omega_0 \, f(x_i; \alpha_0, \beta_0)}{\omega_0 \, f(x_i; \alpha_0, \beta_0) + \omega_1 \, f(x_i; \alpha_1, \beta_1)} \label{eq:postProb}\\
\gamma_1(x_i) &= P(x_i \in \text{class } 1 \mid x_i; \omega, \alpha, \beta) = 1 - \gamma_0(x_i) \notag
\end{align}
where $\omega_1 = 1-\omega_0$ and $f(x; \alpha, \beta)$ is the density function defined in Eq.~\ref{eq:betapdf}.

\vskip 0.15cm
\noindent {\bf M-Step.} 
The mixture parameters are estimated from the posterior probabilities $\gamma_k$. To this end, we first assign the mixture component $k$ to each sample $x_i$ such that $\gamma_k (x_i) \geq 0.5$. The mixture model parameters are estimated as follows:
\begin{align}
(\alpha_k, \beta_k) &= \text{MLE}_{\text{Beta}}(\{ x_i \mid \gamma_k(x_i) \geq 0.5 \}) \quad k \in \{0, 1\}
\label{eq:alphak}
\end{align}
where Maximum Likelihood Estimation ($\text{MLE}_{\text{Beta}}$) is used to estimate the parameters 
$\alpha_k$ and $\beta_k$ by maximizing the log-likelihood $l(\alpha_k, \beta_k)$ based on the observed data.
The maximization involves the digamma function, which does not allow for a closed-form solution, so the optimization is done numerically using the Newton-Raphson method. Specifically, starting from the trivial initialization (or that from the method of moments using sample mean and variance) we iteratively update the estimates of $\alpha_k$ and $\beta_k$ using the first and second derivatives of the log-likelihood until convergence is achieved.
The updated class prior $\omega_0$ shown in equation \ref{eq:omega}, reflects the proportion of the dataset that is currently believed to the outlier class, and is computed as:
\begin{equation}
    \omega_k = \frac{|\{ x_i \mid \gamma_k(x_i) \geq 0.5 \}|}{|D|}, \quad k \in \{0, 1\}
    \label{eq:omega}
\end{equation}

The noise is assumed to be uniformly distributed across both classes, with roughly 50\% of the noise affecting class 1 and 50\% affecting class 0. Therefore, the 
first run of EM is indented to identify the two large components in the whole dataset comprising the merged contaminated classes 0 and 1. Note that the noise partition does not need to be equal between the classes, it is necessary only that a representative amount of samples is present in both similar and dissimilar classes. This step allows us to use all the available data to assess the statistical behavior of the components, each presented by a substantial number of samples. 
To initialize, $\omega$ is set to 0.5 and $\alpha_0, \beta_0, \alpha_1,\beta_1$ are given to present a right-skewed (1,5) and a left-skewed (5,1) components, respectively. This setup helps to automatically have component 0 associated with the dissimilar class since it's peak is lower, than that of component 1, describing similar pairs and located closer to similarity 1. Beyond that, EM is not sensitive to initialization in the two-component Beta mixture case.

The EM algorithm is subsequently run separately for each of the classes. The estimated parameters from the first EM run are used as initial values. 
These two runs are performed on data which is dominated by the samples from one component: for instance in case of class 1, the data modeled is mostly similar pairs described by $f(x; \alpha_1,\beta_1)$, and a smaller fraction $\omega_0$ of noisy (dissimilar) pairs. In this case $\omega_0<<\omega_1$ and hence a pure EM would typically result in a collapsed performance to $\omega_0=0$. This happens because the noise observations are few and do not present a consistent statistical pattern. We address this undesirable performance by fixing the parameters of the dominant component and estimating solely $\omega$ and the Beta parameters of the noise component.
After running EM on class 0 and 1 we use the resulting estimates of $\omega_1$ and $\omega_0$ to threshold the tails of the two distributions, and eliminate the corresponding image pairs from the training data. This constitutes our OD strategy.

\begin{algorithm}[htbp]
\DontPrintSemicolon
\caption{Outlier Detection (OD) Algorithm for Noisy Image Pairs Removal}
\label{alg:od_algorithm}
\footnotesize 
\SetKwFunction{FEM}{EM}
\SetKwProg{Fn}{Function}{:}{}
\Fn{\FEM{$D$, $\alpha_k$, $\beta_k$, Comp } }{
    {\bf Initialize: } $\omega = \{0.5,0.5\};\; \Delta \omega = 1$;
    
    \While{$\Delta \omega \geqslant 0$}{
        \textbf{E-Step:} Compute $\gamma_0, \gamma_1$ in $D$ using Eqs.~\ref{eq:betapdf},~\ref{eq:postProb};
        
        \textbf{M-Step:} Compute $\omega_k^{new}$, $\alpha_k^{new}$, $\beta_k^{new}$ in $D$ using Eqs.~\ref{eq:alphak},~\ref{eq:omega};

        \tcp{Update both mixture weights}
        $\Delta \omega = \|\omega_k^{new} - \omega_k\|;\;\;\; \omega_k \gets \omega_k^{new}$;
        
        \If{$0 \in Comp$} {
        \tcp{Update component 0}
        $\alpha_0 \gets \alpha_0^{new};\; \beta_0 \gets \beta_0^{new}$;}
        \If{$1 \in Comp$}
        {
        \tcp{Update component 1}
        $\alpha_1 \gets \alpha_1^{new};\; \beta_1 \gets \beta_1^{new}$;
        }
    }
    \KwRet $\omega_k$, $\alpha_k$, $\beta_k$;
  }
\vskip 0.05cm

\KwIn{Datasets $D_s = \{x_i, x_j, s\}_{i,j}^{N_s}$, with $s=0$ dissimilar, and $s=1$ similar image pairs $x_i, x_j$;}

Import pre-trained model;

\For{cycle $=1$ to $C$}{
    Train Siamese model on $D=\{D_0,D_1\}$ for $K$ epochs; 
    
    Compute cosine similarities from the Siamese model;

    \tcp{Learn the outline of components from all data}
    $\omega_k^{*}$, $\alpha_k$, $\beta_k \gets$ \FEM($\{D_0,D_1\}$, $\alpha=\{1,5\}$, $\beta=\{5,1\}$, $\{0,1\}$);

    \tcp{Filter the outliers separately in $D_0$ and $D_1$}

    \tcp{Learn only component 1 (component 0 frozen)}
    $\omega_k$, $\alpha_k^{*}$, $\beta_k^{*} \gets$ \FEM($\{D_0\}$, $\alpha_k$, $\beta_k$, $\{1\}$);

    Update $D_0$ by filtering out highest $\omega_1|D_0|$ samples (outliers) based on cosine similarities;

    \tcp{Learn only component 0 (component 1 frozen)}
    $\omega_k$, $\alpha_k^{*}$, $\beta_k^{*} \gets$ \FEM($\{D_1\}$, $\alpha_k$, $\beta_k$, $\{0\}$);
    
    Update $D_1$ by filtering out lowest $\omega_0|D_1|$ samples (outliers) based on cosine similarities;
}
\vskip .05cm
{\bf Output 1:} Filtered $D=\{D_0,D_1\}$ cleaned of similarity-based outliers;

{\bf Output 2:} Trained Siamese architecture ($CK$ epochs) for Object Re-ID.
\end{algorithm}

\vskip 0.1cm
\noindent{\bf Integration with CNN.} 
We alternate between Siamese training and the OD step, consisting of three EM runs (see Alg.~\ref{alg:od_algorithm}).
The initial fine-tuning (prior to the first round of OD) and the later fine-tuning epochs help ensure that hard inliers are learned by the model and are not prematurely filtered out along with the noisy samples. The training process is terminated after a fixed number of cycles $C$. The overwhelming amount of filtering occurs after cycle 1, with the filtered fraction dropping by a factor of 2-5 with each additional cycle of training.
Fig.~\ref{fig:proposed_method} illustrates the impact of label noise on the similarity score histograms, along with the corresponding detected outliers. The image pairs shown represent samples identified as outliers by the EM-based filtering method, with their respective probabilities indicating likelihood of belonging to each class. This visualization highlights how noise skews the score distribution and how the proposed method effectively isolates mislabeled pairs.

\vskip 0.15cm
\noindent
\textbf{Distribution choice for Cosine Similarities.} The proposed method is based on the Beta distribution, since it provides the most flexible natural fit to the cosine similarity quantities in range $[0,1]$. Fig.~\ref{fig:histograms} shows the quality of fit of this distribution to both classes similarities on the CUHK03 dataset under 0-20\% levels of random noise. We also overlay the fitted probability density functions for Gaussian, and Gamma distributions to evaluate their suitability for modeling the data. The fitting has been achieved following a similar EM procedure relying on MLE to obtain corresponding distribution parameter estimates in the M-step. These two classical families of two-parametric distributions provide worse quality fit largely due to unbounded support: $x\in\mathbb{R}$ for Gaussian, and $x\in\mathbb{R}_{>0}$ for Gamma. As can be observed in Fig.~\ref{fig:histograms}, the increasing noise levels push the two similarity distributions closer together, with the Beta distribution consistently provides better fit, highlighting its robustness for modeling pairwise similarity distributions in noisy settings. Importantly, Gaussian and Gamma mixtures are known to allow the mixing distributions' parameters to be identified uniquely from the observed mixtures~\cite{atienza2006new}, and in Sec.~\ref{sec:Beta} below we prove for the first time that the same can also be said for the Beta mixtures considered in this paper.

\begin{figure*}[!h]
    \centering
    \includegraphics[scale=0.4]{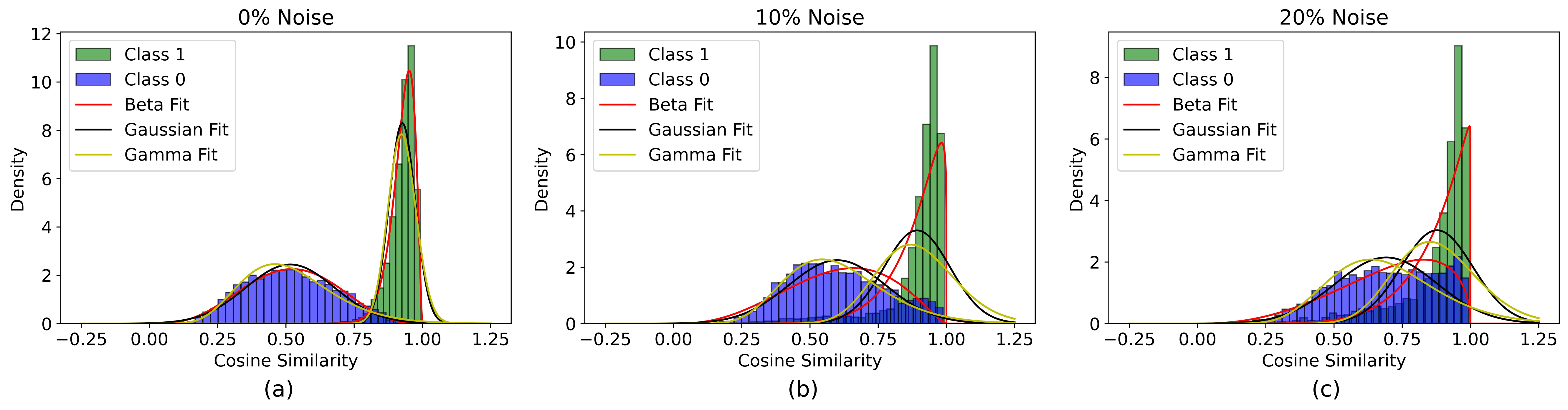}
    \vspace{-.5\baselineskip}
    \caption{Histogram of cosine similarities for both similarity classes on the CUHK03 dataset with overlaid estimated Beta, Gaussian, and Gamma two-component mixtures: with (a) 0\%, (b) 10\% and (c) 20\% random label noise. Unlike Gaussian and Gamma, the Beta distribution models the data in the target range $[0,1]$.}
    \label{fig:histograms}
\end{figure*}

%%%%%%%%%%%%%%%%%%%%%%%%%%%%%%%%%%%%%%%%%%%%%%%%%%%%
%%%%%%%%%%%%%%%%%%%%%%%%%%%%%%%%%%%%%%%%%%%%%%%%%%%%
%%%%%%%%%%%%%%%%%%%%%%%%%%%%%%%%%%%%%%%%%%%%%%%%%%%%

\section{Beta mixtures identifiability}
\label{sec:Beta}

We will demonstrate here that the identification of mixture components designed in Sec.~\ref{sec:OD} is a correctly posed problem in that a unique composition of two Beta distribution components can be recovered. This will be established in Theorem~\ref{th:2} below.

In the following, we refer to Beta distributions $f(x;\alpha,\beta)$
%= \frac{\Gamma(\alpha+\beta)}{\Gamma(\alpha)\Gamma(\beta)} x^{\alpha-1}(1-x)^{\beta-1}$
defined in Eq.~\ref{eq:betapdf} for $(\alpha,\beta) \in \mathbb{R}^2_{>0}$ and $x\in[0,1]$.

We first need to establish the behavior of Gamma function around the points $z=0,-1,-2,...$ where $\Gamma(z)$ is infinite~\cite{abramowitz1965}.
\begin{lemma}\label{lem:1}
    For any $n\in\mathbb{Z}_{\geq 0}$, $\lim\limits_{\varepsilon\to 0^+} \varepsilon \Gamma(-n+\varepsilon) = \frac{(-1)^n}{n!}$.
\end{lemma}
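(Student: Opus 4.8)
The plan is to reduce the behavior of $\Gamma$ near its pole at $-n$ to the known value $\Gamma(1)=1$ by repeated application of the functional equation $\Gamma(z+1)=z\,\Gamma(z)$, equivalently $\Gamma(z)=\Gamma(z+1)/z$. For $0<\varepsilon<1$ every intermediate argument $-n+\varepsilon,\,-n+1+\varepsilon,\dots,\varepsilon$ is non-integer, so $\Gamma$ is finite at each of them and the recurrence is legitimate at every step.

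First I would iterate the recurrence $n+1$ times to climb from the argument $-n+\varepsilon$ up to $1+\varepsilon$, collecting one linear factor in the denominator at each step:
\begin{equation}
\Gamma(-n+\varepsilon)=\frac{\Gamma(1+\varepsilon)}{\varepsilon(\varepsilon-1)(\varepsilon-2)\cdots(\varepsilon-n)}.
\end{equation}
The isolated factor $\varepsilon$ in the denominator is exactly the source of the blow-up as $\varepsilon\to0^+$, so multiplying through by $\varepsilon$ removes it:
\begin{equation}
\varepsilon\,\Gamma(-n+\varepsilon)=\frac{\Gamma(1+\varepsilon)}{(\varepsilon-1)(\varepsilon-2)\cdots(\varepsilon-n)}.
\end{equation}

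Finally I would pass to the limit. Since $\Gamma$ is continuous at $1$, the numerator tends to $\Gamma(1)=1$, while the remaining $n$ factors tend to $(-1)(-2)\cdots(-n)=(-1)^n\,n!$. Hence the ratio tends to $1/((-1)^n n!)=(-1)^n/n!$, which is the claim; the degenerate case $n=0$ is just the single identity $\varepsilon\,\Gamma(\varepsilon)=\Gamma(1+\varepsilon)\to1$. I expect no genuine obstacle here, as this amounts to computing the residue of $\Gamma$ at $-n$ along the positive real direction. The only points needing care are verifying that the recurrence is applied only away from the integer poles (so that no intermediate $\Gamma$ value is itself singular), and the elementary bookkeeping ensuring that precisely $n+1$ factors appear with exactly one factor of $\varepsilon$ available to cancel.
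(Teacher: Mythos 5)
Your proposal is correct and follows essentially the same route as the paper's proof: both iterate the functional equation $\Gamma(z+1)=z\Gamma(z)$ to relate $\Gamma(-n+\varepsilon)$ to $\Gamma(1+\varepsilon)$, isolate the singular factor $\varepsilon$, and pass to the limit using continuity of $\Gamma$ at $1$, yielding $\Gamma(1)/((-1)\cdots(-n))=(-1)^n/n!$. Your added remarks (that intermediate arguments avoid the poles, and the degenerate case $n=0$) are minor refinements of the same argument, not a different approach.
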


\begin{proof}
    Iteratively applying the core fact that $\Gamma(z+1)=z\Gamma(z)$ for any $z \in \mathbb{R}$~\cite{abramowitz1965}, it holds that for any $\varepsilon>0$, and integer $n \geq 0$,
    \begin{align*}
        \Gamma(1+\varepsilon) = \varepsilon(-1+\varepsilon)\cdots(-n+\varepsilon)\Gamma(-n+\varepsilon).
    \end{align*}
    Rearranging, this becomes
    \begin{align*}
        \varepsilon \Gamma(-n+\varepsilon) = \frac{\Gamma(1+\varepsilon)}{(-1+\varepsilon)\cdots (-n+\varepsilon)}.
    \end{align*}
    Then taking limits as $\varepsilon \to 0^+$ yields
    \begin{align*}
        \lim_{\varepsilon\to 0^+} \varepsilon \Gamma(-n+\varepsilon) &= \lim\limits_{\varepsilon \to 0^+} \frac{\Gamma(1+\varepsilon)}{(-1+\varepsilon)\cdots (-n+\varepsilon)}\\
        &= \frac{\Gamma(1)}{(-1)\cdots(-n)} = \frac{(-1)^n}{n!},
    \end{align*}
    as required.
\end{proof}

\begin{theorem} \label{th:1}
    Let $(a_1,b_1),(a_2,b_2) \in \mathbb{R}^2_{>0}$, $\pi \in (0,1)$, and consider the mixture distribution
    \begin{align*}
        F(x) = \pi f(x;a_1,b_1)+(1-\pi)f(x;a_2,b_2).
    \end{align*}
    If $(a_1,b_1)-(a_2,b_2) \notin \mathbb{Z}^2$, or, in other words, either $a_1-a_2$ or $b_1-b_2$ (or both) are not integers, then $F$ has a unique representation as a mixture of two Beta distributions.
\end{theorem}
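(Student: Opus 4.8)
The plan is to pass to the Mellin transform, where a single Beta density has an explicit meromorphic form whose poles and residues transparently encode its parameters. Writing $\tilde f(s;\alpha,\beta) = \int_0^1 x^{s-1} f(x;\alpha,\beta)\,dx = \frac{\Gamma(\alpha+\beta)}{\Gamma(\alpha)}\,\frac{\Gamma(\alpha+s-1)}{\Gamma(\alpha+\beta+s-1)}$, the transform of the mixture $\tilde F(s) = \pi\,\tilde f(s;a_1,b_1) + (1-\pi)\,\tilde f(s;a_2,b_2)$ is a meromorphic function determined by $F$. The numerator factor $\Gamma(\alpha+s-1)$ contributes simple poles at $s = 1-\alpha-n$, $n\in\mathbb{Z}_{\ge 0}$, an arithmetic progression of step $-1$ whose rightmost point $s=1-\alpha$ pins down $\alpha$. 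The residue at $s=1-\alpha-n$ follows directly from Lemma~\ref{lem:1}: substituting $s=1-\alpha-n+\varepsilon$ turns the numerator into $\Gamma(-n+\varepsilon)$, so $\lim_{\varepsilon\to 0^+}\varepsilon\,\tilde f(s;\alpha,\beta) = \frac{\Gamma(\alpha+\beta)}{\Gamma(\alpha)}\frac{(-1)^n/n!}{\Gamma(\beta-n)} = \frac{\Gamma(\alpha+\beta)}{\Gamma(\alpha)\Gamma(\beta)}\binom{\beta-1}{n}(-1)^n$. Each component thus leaves a fully transparent signature: the location of its progression encodes $\alpha$, the ratio of consecutive residues encodes $\beta$, and the leading residue then encodes its mixing weight.

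Next I would exploit the symmetry $x\mapsto 1-x$, under which $f(x;\alpha,\beta)\mapsto f(x;\beta,\alpha)$, to assume without loss of generality that it is the first parameters that differ by a non-integer, $a_1-a_2\notin\mathbb{Z}$ (the hypothesis guarantees at least one coordinate does, and the flip converts the $b$-case to the $a$-case). Under this assumption the two progressions $\{1-a_1-n\}$ and $\{1-a_2-m\}$ lie in distinct residue classes modulo $1$, hence are disjoint, so neither component can cancel or mask the other. The pole set of $\tilde F$ therefore splits canonically into the two progressions, and the rightmost pole of each (which always has nonzero residue, since it corresponds to $n=0$) recovers the unordered pair $\{a_1,a_2\}$ intrinsically from $F$.

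To conclude I would compare an arbitrary second representation $\pi',(a_1',b_1'),(a_2',b_2')$ of the same $F$. Its transform has the same poles, hence the same two classes modulo $1$; since any two poles from a single progression share a class, the second representation cannot have $a_1'-a_2'\in\mathbb{Z}$ either, and matching rightmost poles forces $\{a_1',a_2'\}=\{a_1,a_2\}$. After relabeling, equality of residues along each progression gives $b_i=b_i'$ through the consecutive-residue ratio $-(b_i-1)$, and then equality of the leading residues gives $\pi=\pi'$, establishing uniqueness.

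The step I expect to be the main obstacle is the separation argument, namely proving that the two components genuinely cannot interfere. This is exactly where the non-integer hypothesis is indispensable and where the measure-zero exceptional set originates: if both $a_1-a_2$ and $b_1-b_2$ were integers, the two progressions could coincide and their residues partially cancel, collapsing the pole signature and permitting alternative decompositions via the contiguity relations among Beta densities with integer-shifted parameters. Care is also needed in the degenerate subcase where some $b_i$ is a positive integer, which truncates the corresponding progression to finitely many poles; there the ratio test must be replaced by reading the length of the progression directly, but the recovery of $b_i$ and of the weight still goes through.
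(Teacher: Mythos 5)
Your proof is correct and follows essentially the same route as the paper's: the Mellin transform of the mixture, recovery of $a_1,a_2$ from the rightmost points of the two pole progressions (disjoint, after the $x\mapsto 1-x$ reduction to the case $a_1-a_2\notin\mathbb{Z}$), recovery of each $b_i$ from the ratio of consecutive residues computed via Lemma~\ref{lem:1}, recovery of $\pi$ from the leading residue, and separate treatment of the truncated progression when some $b_i$ is a positive integer. The only notable difference is presentational: you frame uniqueness as matching against an arbitrary second representation and explicitly rule out that such a representation could have integer parameter differences, whereas the paper phrases the argument as direct recovery of the parameters from $M_F$; your framing makes the uniqueness claim against hypothesis-violating alternatives slightly more explicit, but the mathematical content is the same.
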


\begin{proof}
Let $\pi \in (0,1)$, and $(a_1,b_1),(a_2,b_2)\in \mathbb{R}^2_{>0}$ be distinct, such that $(a_1,b_1)-(a_2,b_2) \notin \mathbb{Z}^2$. Without loss of generality, assume that $a_1-a_2 \notin \mathbb{Z}$, and $a_1<a_2$. Indeed, if only $b_1-b_2 \notin \mathbb{Z}$ holds, we can then swap variable to $y:=1-x$ and revert to the $a_1-a_2 \notin \mathbb{Z}$ case due to the symmetry of the Beta distribution.

We will now use the integral Mellin transform defined for any $(\alpha,\beta)\in \mathbb{R}^2_{>0}$ as
\begin{align}
    m_{\alpha,\beta} = \int\limits_0^1 x^{t-1} f(x;\alpha,\beta)\,\text{d}x = \frac{\Gamma(t+\alpha-1)\Gamma(\alpha+\beta)}{\Gamma(t+\alpha+\beta-1)\Gamma(\alpha)}, \label{eq:mellin}
\end{align}
which can also be applied to the mixture $F$, yielding
\begin{align*}
    M_F(t) = \pi m_{a_1,b_1}(t)+(1-\pi)m_{a_2,b_2}.
\end{align*}
Note that the right hand side in Eq.~\ref{eq:mellin} has singularities for $t \in S_{\alpha,\beta}= \{1-\alpha-k:k \in \mathbb{Z}_{\geq 0}\}$ if $\beta \notin \mathbb{Z}_{>0}$, and $t \in S_{\alpha,\beta}= \{1-\alpha-k:k=0,...,\beta-1\}$ if $\beta \in \mathbb{Z}_{>0}$. To see this, notice that when $\beta \notin \mathbb{Z}_{>0}$, $\Gamma(t+\alpha-1)$ is non-finite if and only if $t \in S_{\alpha,\beta}$, and is the only non-finite term appearing in $S_{\alpha,\beta}$ in this case. Since the denominator is never zero due to the properties of $\Gamma$, these are the only values at which the numerator is non-finite, and a singularity exists. When $\beta \in \mathbb{Z}_{>0}$, using the core fact that $\Gamma(z+1)=z\Gamma(z)$ for all $z \in \mathbb{R}$, we can write 
\begin{align*}
\frac{\Gamma(1+\alpha-k)}{\Gamma(1+\alpha+\beta-k)} = \frac{1}{(1+\alpha-k)\cdots(1+\alpha+(\beta-1)-k)}.
\end{align*}
This expression has a singularity precisely when the denominator is zero, which happens if and only if $t \in S_{\alpha,\beta}$.

We now have that $m_{\alpha,\beta}$ converges outside of the set of singularities. Let $S:=S_{a_1,b_1} \cup S_{a_2,b_2}$ be the singularities of $M_F(t)$. Hence
$$
    \max S = \max S_{a_1,b_1} \cup S_{a_2,b_2}
    = \max \{1-a_1,1-a_2\} = 1-a_1.
$$
Thus, parameter $a_1$ can be uniquely recovered from observing the behavior of the mixture $F$ through $M_F(t)$. Moreover, as $S_{a_1,b_1}$ and $S_{a_2,b_2}$ are disjoint by the fact that $a_1-a_2 \notin \mathbb{Z}$, 
\begin{align*}
    \max S\backslash S_{a_1,b_1} = \max S_{a_2,b_2} = 1-a_2.
\end{align*}
Hence the parameter $a_2$ can also be recovered.

We can now address the recovery of $b_1,b_2$. Note that if $b_1 \in\mathbb{Z}_{>0}$ or $b_2 \in \mathbb{Z}_{>0}$, then $|S_{a_1,b_1}| = b_1$ or $|S_{a_2,b_2}| = b_2$, respectively. Also, we may recover $\pi$ by noticing that
\begin{align*}
    \pi = \frac{F(x)-f(x;a_2,b_2)}{f(x;a_1,b_1)-f(x;a_2,b_2)},
\end{align*}
and evaluating the right hand side at any $x \in (0,1)$. Going forward, we assume $(b_1,b_2) \notin \mathbb{Z}^2$. Define the sequence $\{t_n\}_{n\in\mathbb{Z}_{>0}}$, where $t_n = 1-a_1-n$. We now define residuals
$$R_n :=\lim\limits_{t \to t_n^+} (t-t_n)M_F(t).$$
Since $F$ is known (or observed), so is $M_F(t)$ for any $t$, and hence $R_n$ can be computed for each $n$. Moreover, as $m_{a_2,b_2}(t_n)$ is finite for each $n$, we have
\begin{align*}
    R_n &= \lim\limits_{t \to t_n^+} (t-t_n)[\pi m_{a_1,b_1}(t)+(1-\pi)m_{a_2,b_2}]\\
    &= \pi\lim\limits_{t \to t_n^+} (t-t_n)m_{a_1,b_1}(t).
\end{align*}
Making the substitution $\varepsilon = t_n-t$ and using Eq.~\ref{eq:mellin} we get
\begin{align*}
    &R_n = \pi \lim\limits_{\varepsilon \to 0^+} \varepsilon\frac{\Gamma(-n+\varepsilon)\Gamma(a_1+b_1)}{\Gamma(-n+b+\varepsilon)\Gamma(a_1)}\\
    &= \frac{\pi \Gamma(a_1+b_1)}{\Gamma(a_1)}\lim\limits_{\varepsilon \to 0^+} \left[\frac{1}{\Gamma(-n+b_1+\varepsilon)}\right] \lim\limits_{\varepsilon\to 0^+} \left[\varepsilon \Gamma(-n+\varepsilon)\right]  \\
    &=\frac{\pi \Gamma(a_1+b_1)}{\Gamma(a_1)} \frac{1}{\Gamma(-n+b_1)} \frac{(-1)^n}{n!},
\end{align*}
where the last equality uses Lemma \ref{lem:1}. For $n=0,1$, we have
\begin{align*}
    &R_0 = \frac{\pi\Gamma(a_1+b_1)}{\Gamma(a_1)\Gamma(b_1)},\\
    &R_1 = -\frac{\pi \Gamma(a_1+b_1)}{\Gamma(a_1)\Gamma(b_1-1)} = -\frac{\pi(b_1-1)\Gamma(a_1+b_1)}{\Gamma(a_1)\Gamma(b_1)},
\end{align*}
where the last equality follows from the fact that $\Gamma(b_1)=(b_1-1)\Gamma(b_1-1)$. Then notice that
\begin{align*}
    \frac{R_1}{R_0}  = -(b_1-1).
\end{align*}
Thus, we can recover $b_1$ and $\pi$ as
\begin{align*}
    b_1 = 1-\frac{R_1}{R_0}, \quad \pi = R_0 \frac{\Gamma(a_1)\Gamma(b_1)}{\Gamma(a_1+b_1)}.
\end{align*}
The recovery of $b_2$ follows an identical logic, using the sequence $t_n'=1-a_2-n$. Therefore, we have recovered unique values of $a_1,b_1,a_2,b_2$ and $\pi$. Thus, $F$ must have a unique representation as a mixture of two Beta distributions.
\end{proof}

To complement the statement in Theorem~\ref{th:1}, we can notice that violation of the given restriction can indeed result in non-identifiability. For example, one can easily show that
\begin{align*}
    \frac{1}{2}f(x;2,2) + \frac{1}{2} f(x;2,3) = \frac{1}{4} f(x;3,2) + \frac{3}{4}f(x;2,3),
\end{align*}
which occurs since both $a_1-a_2=2-2=0$ and $b_1-b_2=2-3=-1$ on the left hand side are integers, which results in the existence of another distinct pair of Beta distributions that produce an identical mixture.

The property established in Theorem~\ref{th:1} is called {\it identifiability} of mixtures. This means that having observed a mixture distribution, the original mixed components' parameters as well as the mixing proportion are uniquely recoverable. We can thus reformulate the previous result as follows:

\begin{theorem}\label{th:2}
 Let $(a_1,b_1),(a_2,b_2) \in \mathbb{R}^2_{>0}$, $\pi \in (0,1)$, and consider the mixture of two Beta distributions
    \begin{align*}
        F(x) = \pi f(x;a_1,b_1)+(1-\pi)f(x;a_2,b_2).
    \end{align*}
Then with probability $1$, $F(x)$ is identifiable.
\end{theorem}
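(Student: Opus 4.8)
The plan is to reduce the claim directly to Theorem~\ref{th:1}, which already certifies identifiability whenever at least one of the coordinate differences $a_1-a_2$ or $b_1-b_2$ is non-integer. Consequently, the only parameter configurations for which Theorem~\ref{th:1} fails to guarantee identifiability are those in the \emph{exceptional} set
$$E = \{(a_1,b_1,a_2,b_2) \in \mathbb{R}^4_{>0} : a_1-a_2 \in \mathbb{Z} \text{ and } b_1-b_2 \in \mathbb{Z}\}.$$
The entire theorem therefore reduces to showing that $E$ is negligible, i.e.\ has Lebesgue measure zero in the parameter space, so that under any parameter-generating distribution absolutely continuous with respect to Lebesgue measure the event $E$ occurs with probability zero.

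First I would note that $E$ is contained in the larger set $E' = \{a_1-a_2 \in \mathbb{Z}\}$, and decompose $E'$ as the countable union $\bigcup_{k\in\mathbb{Z}} H_k$, where $H_k = \{(a_1,b_1,a_2,b_2) : a_1-a_2 = k\}$. Each $H_k$ is the positive part of an affine hyperplane of codimension one in $\mathbb{R}^4$ and hence carries four-dimensional Lebesgue measure zero. Since a countable union of null sets is null, $E'$ is null, and therefore so is its subset $E$. (In fact $E$ is even thinner, being a countable union of codimension-two affine subspaces, but the codimension-one bound on $E'$ already suffices.)

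The remaining step is to translate this measure-theoretic fact into the probabilistic statement. Here I would make explicit the natural interpretation that the mixture parameters $(a_1,b_1,a_2,b_2,\pi)$ are drawn from a distribution absolutely continuous with respect to Lebesgue measure on $\mathbb{R}^2_{>0}\times\mathbb{R}^2_{>0}\times(0,1)$; under this convention every Lebesgue-null set receives probability zero, so $P(E)=0$. On the complementary event, which has probability one, Theorem~\ref{th:1} applies and furnishes a unique representation of $F$ as a mixture of two Beta components. I expect the only genuine subtlety, as opposed to a computational obstacle, to be stating this absolute-continuity convention cleanly: the phrase ``probability~1'' is meaningful only once a parameter model is fixed, whereas the measure-zero counting argument itself is entirely routine.
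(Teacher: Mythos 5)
Your proposal is correct and takes essentially the same route as the paper: both reduce Theorem~\ref{th:2} to Theorem~\ref{th:1} and then argue that the exceptional set where both $a_1-a_2$ and $b_1-b_2$ are integers is Lebesgue-null. The only differences are minor: the paper fixes $(a_1,b_1)$ and observes that the bad pairs $(a_2,b_2)$ form the countable set $\{(a_1+m,b_1+n)\}_{m,n\in\mathbb{Z}}$, hence null in $\mathbb{R}^2$, whereas you work in the joint parameter space $\mathbb{R}^4_{>0}$ via a countable union of hyperplanes, and your explicit absolute-continuity convention for what ``probability $1$'' means is a useful clarification that the paper leaves implicit.
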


\begin{proof}
To demonstrate that this result immediately follows from Theorem~\ref{th:1} it suffices to observe that for any $(a_1,b_1)\in\mathbb{R}^2_{>0}$ the probability of encountering such $(a_2,b_2)$ that both $a_1-a_2\in\mathbb{Z}$ and $b_1-b_2\in\mathbb{Z}$ is zero. This is indeed the case since this set $\{(a_1+m,b_1+n)\}_{m,n\in \mathbb{Z}}$, contains countably many elements by construction, and hence has Lebesgue measure zero.
\end{proof}

Theorem~\ref{th:1} provides a novel and significant result for two-component Beta mixtures with an explicit restriction on non-identifiable cases. From the applications perspective, since this result holds with probability 1, we can expect the identifiability to be the case for any practically observed mixture. This means that parameter estimation solved via EM is posed correctly.

%%%%%%%%%%%%%%%%%%%%%%%%%%%%%%%%%%%%%%%%%%%%%%%%%%%%
%%%%%%%%%%%%%%%%%%%%%%%%%%%%%%%%%%%%%%%%%%%%%%%%%%%%
%%%%%%%%%%%%%%%%%%%%%%%%%%%%%%%%%%%%%%%%%%%%%%%%%%%%

\section{Siamese architecture and Loss Functions}
\label{sec:arch}
When dealing with Siamese architectures, contrastive loss is commonly employed, where pairs yielding lower loss values are treated as similar, while those with higher loss values are treated as dissimilar~\cite{24, 25, 26}. Alternatively, similarity can be assessed using cosine similarity between feature vectors, followed by classification using either a predefined threshold or a learnable linear layer~\cite{27}.

In this work, rather than relying solely on contrastive loss or cosine similarity-based loss, we employ a combination of loss functions to optimize the model parameters more effectively. The combination includes binary cross-entropy loss $ (\mathcal{L}_{CE}) $, cosine-embedding loss $ (\mathcal{L}_{COS}) $ and contrastive loss $ (\mathcal{L}_{C})$.
The cross-entropy loss:
\begin{equation}
    \mathcal{L}_{CE} = -\frac{1}{N} \sum_{i=1}^{N} \left[ y_i \log \hat{y}_i + (1 - y_i) \log (1 - \hat{y}_i) \right],
    \label{eq:bce_loss}
\end{equation}
is obtained from a fully connected neural net (FCNN). The FCNN takes the 1280-dimensional vectors as an input and includes four fully connected layers with a binary output layer.

The contrastive loss:
\begin{equation}
    \mathcal{L}_{C} = \frac{1}{N} \sum_{i=1}^{N} \left[ y_i d_i^2 + (1 - y_i) \max(0, m - d_i)^2 \right],
    \label{eq:contrastive_loss}
\end{equation}
where $ d_i $ is the euclidean distance and $ m $ is the margin parameter,
is effective when class boundaries are well separated. However, in applications such as person Re-ID, visually similar individuals, for instance, two people viewed from behind wearing the same clothing color, may produce low Euclidean distances, leading to potential misidentification.
This arises because contrastive loss alone fails to capture subtle appearance differences that are not well represented by Euclidean distance.

To address this, we combine contrastive loss with cosine embedding loss, which is more effective at distinguishing such instances by focusing on angular similarity rather than just distance.
\begin{equation}
\begin{split}
\mathcal{L}_{\text{COS}} = 
\frac{1}{N} \sum_{i=1}^{N}
\begin{cases}
1 - \cos(\mathbf{x}_i, \mathbf{y}_i), &\text{if similar} \\
\max(0, \cos(\mathbf{x}_i, \mathbf{y}_i) - m), &\text{otherwise}
\end{cases}
\end{split}
\label{eq:cosine_embedding_loss}
\end{equation}
where $ \mathbf{x}_i, \mathbf{y}_i $ represent the feature vectors of a similar/dissimilar image pair. $ \cos(\mathbf{x}_i, \mathbf{y}_i) $ is the cosine similarity between feature vectors and $m$ is the margin value, controlling the penalty for high similarity. $\mathcal{L}_{\text{COS}}$ is beneficial when comparing images that are similar in terms of appearance but not in spatial orientation / pose.

The total loss $ \mathcal{L}_{total} $ is the weighted sum of three loss functions:
\begin{equation}
    \mathcal{L}_{total} = \mathcal{L}_{CE} + \lambda \mathcal{L}_{COS} + (1-\lambda) \mathcal{L}_{C}
    \label{eq:combined_loss}
\end{equation}
where $ \lambda $ is the weighting hyper-parameters that balances the contribution of the loss terms.

The overall architecture of the Siamese network with the proposed losses and the OD module is shown in Fig.~\ref{fig:proposed_method}. The OD module is performing filtering of the two datasets with dissimilar and similar image pairs after every $K$ epochs such that the outliers are being identified and filtered out in a task-specific manner.

%%%%%%%%%%%%%%%%%%%%%%%%%%%%%%%%%%%%%%%%%%%%%%%%%%%%
%%%%%%%%%%%%%%%%%%%%%%%%%%%%%%%%%%%%%%%%%%%%%%%%%%%%
%%%%%%%%%%%%%%%%%%%%%%%%%%%%%%%%%%%%%%%%%%%%%%%%%%%%

\section{Experiments}
\label{sec:exp}
\subsection{Experimental Setup}

\noindent {\bf Evaluation Metrics.}
In our work, we focus on noisy-label object Re-ID but also address noise detection. The proposed architecture consists of two components—MobileNetV2 for feature extraction and a FCNN for classifying image pairs similarity. To evaluate the performance of the model, we employ distinct metrics for each task. For classification, we report standard evaluation metrics including test accuracy, precision, and recall. For object Re-ID, we utilize the rank-k matching accuracy and the mean Average Precision (mAP).

The mAP metric provides a comprehensive evaluation of re-identification performance by averaging the precision values across all queries. The Average Precision (AP) for each query, and hence the overall mAP are obtained as follows:
$$
    AP  = \frac{\sum_{k=1}^{n} P(k) gt(k)}{N_{gt}}, \qquad
    mAP = \frac{\sum_{q=1}^{Q} AP(q)}{Q},
$$
where, $k$ denotes the rank position among the $n$ retrieved objects, $N_{gt}$ represents the overall number of relevant (ground-truth) objects, and Q is the total number of query images. $P(k)$ denotes the precision at rank k in the retrieval list, while $gt(k)$ indicates if the  $k$-th retrieved image is a correct match or not.

Rank-k (Rk) measures the proportion of queries for which the correct match appears within the top-k retrieved results:
$$Rk = \frac{\sum_{q=1}^{Q} gt(q,k)}{Q},$$
where $gt(q,k)$ equals 1 when ground truth of q image appears in the top $k$ retrieved samples.

\vskip 0.15cm
\noindent {\bf Datasets.}
The proposed methodology is evaluated on three multi-view Re-ID datasets: CUHK03~\cite{29} and  Market1501~\cite{30} for person Re-ID, and VeRI-776~\cite{28} for cars Re-ID.

\textit{CUHK03} dataset contains 14,097 images of 1,467 identities captured by 5 disjoint camera views. The images are divided into training set with 7,365 images, gallery set with 5,332 images and query set with 1400 image samples. Two type of annotation is provided i.e., manually labeled and detected bounding boxes. We use the latter to enable comparisons with the benchmark methods.

\textit{Market1501} dataset contains 32,668 images  captured by 6 disjoint camera views. The images set is further divided into training+gallery sets containing 12,936+19,732 images of 751 identities, and the query set containing 3,368 images of another 750 identities.

\textit{VeRi-776} dataset contains 51,035 images of 776 vehicles. Each vehicle is captured by up to 18 disjoint camera views. The dataset is provided with different types of annotations including bounding boxes, vehicle types, colors, brands and cross-camera relations. The dataset is further divided into training set with 37,778 images, gallery set with 11,579 images and query set with 1678 images.

As the proposed method classifies image pairs into similar and dissimilar categories, we begin by generating image pairs across all identities and assigning them to similar class (label 1). An equal number of dissimilar pairs (label 0) are created by randomly selecting images from different identities and pairing them together.

\vskip 0.15cm
\noindent {\bf Label Noise Generation.}
We evaluate our methodology under the following two core noise settings, each applying the equal target amount of corrupt labels to both image pairs classes, similar and dissimilar. 
Firstly, we generate {\it Random Noise} datasets. Similarly to~\cite{19,20,23} we introduce noise by randomly (uniformly) selecting images from the dataset and reassigning their labels to different (random) identities. Specifically, class 0 (dissimilar) label noise is introduced by pairing images from the same identity while label noise in class 1 is generated by pairing images from distinct identities.
Secondly, we consider {\it Pattern Noise}, where the datasets is curated to render denoising more challenging~\cite{21,23}. To do so, we trained a ResNet-50 model on a clean dataset. This network was then used to extract feature representations of all image samples. To generate pattern noise labels, we paired samples from different identities that exhibited highest cosine similarity and labeled them as similar pairs (class 1). Conversely, we paired samples of the same identity with lowest cosine similarity (originally belonging to class 1) and mislabeled these as class 0.

\vskip 0.15cm
\noindent {\bf Implementation Details.}
The proposed methodology is implemented in PyTorch using a pre-trained MobileNetV2~\cite{sandler2018mobilenetv2} as the backbone. The similarity FCNN consists of four fully connected layers (512, 512, 256, and 128 neurons) followed by Batch Normalization and ReLU activation. Dropout layers with a 0.3 rate are applied after the last two layers. Data augmentation includes random cropping and horizontal flipping, with image resolutions of 300x150, 128x64, and 150x150 for the CUHK03, Market1501, and VeRi-776 datasets, respectively. A batch size of 32 is used for all datasets, and the Adam optimizer with an initial learning rate of 0.001 decaying every 7 epochs. Training stops once the model stabilizes. The number of epochs $K$ is determined based on validation accuracy; once the validation accuracy stabilizes, the OD process is initiated. We have observed little sensitivity to this parameter having considered a range of 2-8 epochs. Parameter $C$ depends on $w_k$, and OD is terminated when $w_k$ drops below 0.01\%. In general, the selection of $K$ and $C$ should allow for a similar overall number of training epochs $CK$ as the original architecture suggests. The combined loss is used to update model parameters, with $\lambda=0.45$ tuned via grid search.

From a computational perspective, the cost of running Beta-SOD is essentially equivalent to training the underlying Siamese backbone and subsequent FCNN. The additional overhead of the EM step for noise identification and filtering is minimal, requiring less than 10 milliseconds on a CPU. Since the OD module is relevant only for the training phase, the deployment consists in simply running the Siamese backbone + FCNN.

\subsection{Results \& Comparisons}
In this section we present the outcomes of the noise detection, as well as the person and vehicle Re-ID task. We also report comparisons with alternative statistical models for the OD module and ablation studies.

The technique proposed in this work relies on explicit statistical modeling under the assumption that noise represents a contamination to the clean signal component. For this reason, we focus on Beta-SOD evaluation with noise levels at 0-30\%, where from statistical point of view, the ``true'' distribution is dominant. At a 50\% noise level, the histograms of the two classes are practically identical (in the random noise setting), making it unfeasible for the proposed statistical method to perform effectively without additional supervision. We hence choose not to consider such scenarios of noise at 50\% or higher, which require structural modifications and result in relatively poor performance, e.g., under 20\% mAP on the CUHK03 dataset~\cite{23}. Furthermore, we consider such extreme contamination levels to be uncommon in practice, and thus of less interest.

\vskip 0.15cm
\noindent {\bf Siamese backbone selection.}
In Tab.~\ref{table:backbone_CNNs} we present comparisons between several typical Siamese architecture backbones: ResNet-18, ResNet-50~\cite{he2016deep}, and MobileNetV2~\cite{sandler2018mobilenetv2}. Given its size and very competitive accuracy reported, we employ MobileNetV2 as the primary architecture for our model.

\begin{table*}[t]
\caption{Siamese Backbone Network selection on CUHK03 Dataset with various random noise level (5 runs average)}
\resizebox{\textwidth}{!}{%
\centering
\begin{tabular}{c|c|cccc|cccc|cccc}
\hline
\multirow{2}{*}{\begin{tabular}[c]{@{}c@{}}Backbone\\ Networks\end{tabular}} & \multirow{2}{*}{Parameters} & \multicolumn{4}{c|}{0\%} & \multicolumn{4}{c|}{10\%} & \multicolumn{4}{c}{30\%} \\ \cline{3-14} 
 &  & mAP & R1 & R5 & R10 & mAP & R1 & R5 & R10 & mAP & R1 & R5 & R10 \\ \hline
Resnet18 & $\sim$4M & 50.94 & 52.71 & 65.18 & 75.2 & 38.1 & 43.31 & 56.18 & 70.92 & 31.8 & 35.4 & 54.92 & 68.8 \\
Resnet50 & $\sim$23.9M & 58.9 & \textbf{58.7} & \textbf{74.61} & 79.7 & \textbf{48.91} & 52.1 & 70.9 & 77.7 & 45.51 & 50.14 & 70.21 & \textbf{77.1} \\
MobileNet-V2 & $\sim$3.4M & \textbf{59.2} & 58.56 & 74.58 & \textbf{79.82} & 48.33 & \textbf{52.61} & \textbf{71.82} & \textbf{77.75} & \textbf{46.8} & \textbf{50.2} & \textbf{71.01} & 76.9 \\ \hline
\end{tabular}}
\label{table:backbone_CNNs}
\end{table*}

\begin{table*}[ht]
\centering
\caption{Noise identification performance with Various Random Noise Levels (5 runs average)}
\begin{tabular}{c|cccc|cccc|cccc}
\hline
\multirow{2}{*}{Noise} & \multicolumn{4}{c|}{Market1501} & \multicolumn{4}{c|}{CUHK03} & \multicolumn{4}{c}{VeRi-776} \\ \cline{2-13} 
 & Outliers & Precision & Recall & Accuracy & Outliers & Precision & Recall & Accuracy & Outliers & Precision & Recall & Accuracy \\ \hline
0\% & 0.5 & - & - & 94.8 & 1.62 & - & - & 89.5 & 1.3 & - & - & 93.89 \\
10\% & 10.31 & 85.79 & 76.87 & 92.1 & 11.32 & 82.09 & 73.63 & 87.21 & 11.34 & 87.46 & 84.8 & 92.1 \\
20\% & 21.26 & 82.93 & 80.56 & 91.8 & 21.56 & 80.47 & 76.08 & 85.1 & 22.76 & 85.66 & 88.39 & 90.83 \\
30\% & 32.97 & 75.73 & 81.67 & 90.7 & 32.28 & 75.75 & 81.18 & 84.2 & 33.05 & 84.47 & 89.86 & 89.76 \\ \hline
\end{tabular}
\label{tab:denoising_results}
\end{table*}

\begin{table*}[!t]
\caption{
Noise Identification performance of the OD based on Gaussian, Gamma and Beta mixtures with various random noise levels on CUHK03 Dataset (5 runs Average)}
\resizebox{\textwidth}{!}{%
\begin{tabular}{c|cccc|cccc|cccc}
\hline
\multirow{2}{*}{Noise} & \multicolumn{4}{c|}{Gaussian mixture} & \multicolumn{4}{c|}{Gamma mixture} & \multicolumn{4}{c}{Beta mixture} \\ \cline{2-13}
 & Outliers & Precision & Recall & Accuracy & Outliers & Precision & Recall & Accuracy & Outliers & Precision & Recall & Accuracy \\ \hline
0\% & 2.13 & - & - & 87.71 & 4.51 & - & - & 87.3 & 1.62 & - & - & 89.5 \\
10\% & 13.18 & 76.01 & 71.99 & 85.62 & 13.09 & 72.4 & 70.95 & 83.92 & 11.33 & 82.28 & 73.88 & 87.12 \\
20\% & 24.14 & 72.15 & 75.7 & 81.55 & 25.06 & 64.42 & 75.93 & 77.6 & 21.56 & 80.32 & 76.19 & 85.4 \\ \hline
\end{tabular}
}
\label{tab:BGG_outlier}
\end{table*}

\vskip 0.15cm
\noindent {\bf Noise filtering performance.}
We first assess the OD module's performance 
in identifying the contamination, i.e. incorrectly labeled (noisy) image pairs, in the dataset of cosine similarities. As reported in Tab.~\ref{tab:denoising_results},
the proposed model achieves 94.8\%, 89.5\%, and 93.89\% similarity classification accuracy on clean Market1501, CUHK03, and VeRi-776 datasets. At 30\% noise level, these accuracies decline by 4.1\%, 5.3\%, and 4.13\%. 
Increasing the noise level from 10\% to 30\% results in slightly more detected outliers, indicating that the proposed denoising method effectively filters noisy pairs with a tendency towards overshooting. At higher noise levels (30\%), precision decreases slightly due to more false positives, while recall improves as more true positives are retained and fewer false negatives occur. Nevertheless, precision remains around 76\% for Market1501 and CUHK03, and 84\% for VeRi-776, demonstrating the robustness of the proposed method against label noise. This degradation can be primarily attributed to the removal of challenging positive or negative examples during the denoising process, which somewhat reduces the model’s ability to generalize. A small subset of challenging noisy samples persist after filtering, further contributing to the similarity classification decline. 

\vskip 0.15cm
\noindent {\bf Distribution modeling.}
To further evaluate the suitability of the Beta distribution for modeling cosine similarities, we compare the proposed Beta model with Gamma and Gaussian distributions for the OD on the CUHK03 dataset, which is summarized in Table~\ref{tab:BGG_outlier}.The Beta distribution achieves the highest classification accuracy of 89.5\% on the clean dataset, whereas the relatively lower scores of the Gaussian and Gamma distributions.  Moreover, as the noise level increases to 20\%, the amount of over-filtering reaches 4–5\% for Gamma and Gaussian distributions, whereas the Beta model shows substantially lower overshooting of only 1–2\%, confirming its robustness in noisy conditions. Qualitatively, in Fig.~\ref{fig:histograms} we can observe that both Gaussian and Gamma distributions do not fit the data as effectively as the Beta distribution. This over-filtering of challenging pairs negatively affects dataset composition and, consequently, reduces model generalization. Precision and recall scores in Tab.~\ref{tab:BGG_outlier} further highlight these differences: at 20\% noise level, the Gamma distribution filters a large number of false positives, resulting in a lower precision of around 65\%, whereas the Beta distribution maintains a higher precision of over 80\% compared to both alternatives. This empirically supports our assumption that the Beta distribution is a more appropriate model due to both its flexibility and native unit range support.

\begin{table*}[ht]
\caption{
Comparison of Person Re-ID performance of Beta-SOD against Noise-Aware Re-Id ~\cite{21}, DistributionNet~\cite{19}, PurifyNet~\cite{20}, CORE~\cite{22}, URNet~\cite{URNet}, and TSNT~\cite{23} on the Market1501 and CUHK03 datasets with various random noise levels (5 runs Average and Standard Deviation)}
\resizebox{\textwidth}{!}
{\centering
\begin{tabular}{c|l|cccc|cccc}
\hline
\multirow{2}{*}{Noise} & \multicolumn{1}{c|}{\multirow{2}{*}{Methods}} & \multicolumn{4}{c|}{Market1501} & \multicolumn{4}{c}{CUHK03} \\ \cline{3-10} 
 & \multicolumn{1}{c|}{} & mAP & R1 & R5 & R10 & mAP & R1 & R5 & R10 \\ \hline
\multirow{8}{*}{0\%} & DistributionNet & 70.8 & 87.3 & 94.7 & 96.7 & 38.5 & 39.4 & 58.9 & 67.9 \\
 & URNet & 71.2 & 87.9 & 95.7 & 97.4 & 43.2 & 45.7 & 64.4 & 74.6 \\
 & PurifyNet & 72.1 & 88.4 & 95.8 & 97.6 & 44.8 & 45.8 & 65.2 & 74.2 \\
 & CORE & 74.6 & 89.6 & 96.4 & 98.1 & 46.2 & 46.3 & 65.5 & 75.2 \\
 & TSNT & 80 & 92.7 & 97.1 & 98.2 & 54.3 & 55 & 73.4 & 81.3 \\
 & Noise-Aware Re-Id & 84.4 & 94.4 & 98.1 & 99 & 50.6 & 51.7 & 72.1 & 81 \\ \cline{2-10} 
 & Siamese without OD & \textbf{85.5 $\pm0.9$} & 94.44  $\pm0.62$ & \textbf{98.81 $\pm0.32$} & 98.57 $\pm0.02$ & \textbf{57.1 $\pm1.32$} & \textbf{59.57 $\pm0.93$} & 74.11 $\pm0.81$ & \textbf{81.52 $\pm0.64$} \\
 & Beta-SOD & 84.5 $\pm1.2$ & \textbf{94.51 $\pm0.71$} & 98.35 $\pm0.32$ & \textbf{98.83 $\pm0.02$} & 56.56 $\pm1.5$ & 59.2 $\pm1.1$ & \textbf{74.58 $\pm0.91$} & 79.82 $\pm0.72$ \\ \hline
\multirow{6}{*}{10\%} & DistributionNet & 63.36 & 89.85 & 96.25 & 98.02 & 31.8 & 32.29 & 51.81 & 61.67 \\
 & PurifyNet & 64.3 & 84.2 & 93.7 & 95.6 & 32.8 & 32.8 & 54.8 & 65 \\
 & CORE & 67.7 & 85.5 & 94 & 96.4 & 39.6 & 40.4 & 60.8 & 70.9 \\
 & Noise-Aware Re-Id & 69.9 & 88.8 & 95.4 & 97.4 & 42.4 & 43.5 & 64.8 & 74 \\ \cline{2-10} 
 & Siamese without OD & 59.21 $\pm1.12$ & 75.92 $\pm0.91$ & 89.34 $\pm0.51$ & 92.11 $\pm0.05$ & 29.01 $\pm2.2$ & 32.44 $\pm1.2$ & 48.1 $\pm1.1$ & 60.91 $\pm1$ \\
 & Beta-SOD & \textbf{80.36 $\pm1.43$} & \textbf{92.26 $\pm0.92$} & \textbf{97 $\pm0.49$} & \textbf{98 $\pm0.03$} & \textbf{48.33 $\pm1.9$} & \textbf{52.61 $\pm1.3$} & \textbf{71.82 $\pm0.95$} & \textbf{77.75 $\pm0.77$} \\ \hline
\multirow{8}{*}{20\%} & DistributionNet & 53.4 & 77 & 90.6 & 94 & 24.2 & 24.32 & 43.1 & 53.1 \\
 & URNet & 52.9 & 76.0 & 90.0 & 93.4 & 21.9 & 23.1 & 41.6 & 52.1 \\
 & PurifyNet & 63.1 & 83.1 & 93.3 & 95.9 & 29.2 & 30.3 & 49.8 & 58.4 \\
 & CORE & 66.2 & 84.1 & 93.1 & 95.5 & 35 & 34.4 & 55 & 64.1 \\
 & TSNT & 76 & 90.3 & 96.4 & \textbf{97.7} & \textbf{48.3} & 49.4 & 67.9 & 76.4 \\
 & Noise-Aware Re-Id & 65.5 & 86.2 & 95.7 & 97.6 & 29.2 & 30.9 & 50.6 & 62 \\ \cline{2-10} 
 & Siamese without OD & 50.1 $\pm1.51$ & 72.71 $\pm1.1$ & 85.55 $\pm1$ & 92 $\pm1$ & 23.21 $\pm2.3$ & 25.3 $\pm1.5$ & 40.58 $\pm1.25$ & 50.01 $\pm1.21$ \\
 & Beta-SOD & \textbf{78.32 $\pm1.62$} & \textbf{91 $\pm1.39$} & \textbf{97.11 $\pm0.61$} & 97.61 $\pm0.10$ & 48.12 $\pm2.1$ & \textbf{51.25 $\pm1.32$} & \textbf{71.11 $\pm1$} & \textbf{77.75 $\pm0.78$} \\ \hline
\multirow{5}{*}{30\%} & PurifyNet & 60.2 & 81.4 & 92.3 & 94.9 & 26.4 & 26.6 & 44.6 & 54.2 \\
 & CORE & 60.6 & 81.2 & 92.2 & 94.5 & 34.7 & 34.6 & 56.4 & 64.6 \\
 & Noise-Aware Re-Id & 54.5 & 79.1 & 92.5 & 95.2 & 17.3 & 18.1 & 34.8 & 44.1 \\ \cline{2-10} 
 & Siamese without OD & 44.66 $\pm1.97$ & 65.15 $\pm1.5$ & 78.83 $\pm1.1$ & 82 $\pm1.13$ & 15.71 $\pm2.5$ & 19.19 $\pm1.51$ & 30.84 $\pm1.31$ & 42.4 $\pm1.27$ \\
 & Beta-SOD & \textbf{66.01 $\pm1.91$} & \textbf{85.32 $\pm1.52$} & \textbf{93.26 $\pm1.1$} & \textbf{95.64 $\pm0.27$} & \textbf{46.8 $\pm2.1$} & \textbf{49.88 $\pm1.29$} & \textbf{65.71 $\pm1.1$} & \textbf{76.9 $\pm0.78$} \\ \hline
\end{tabular}}
{\noindent \small \qquad $^*$We show only the available benchmark results as reported in the corresponding papers}
\label{table:person_ReId}
\end{table*}

\begin{table*}[!t]
%\resizebox{\textwidth}{!}{
\centering
\caption{Person Re-ID performance of the OD method using Gaussian, Gamma, and Beta mixture models on the CUHK03 dataset with various random noise levels (5 runs Average)}
\begin{tabular}{c|cccc|cccc|cccc}
\hline
\multirow{2}{*}{Noise} & \multicolumn{4}{c|}{Gaussian mixtures} & \multicolumn{4}{c|}{Gamma mixtures} & \multicolumn{4}{c}{Beta mixtures} \\ \cline{2-13} 
 & mAP & R1 & R5 & R10 & mAP & R1 & R5 & R10 & mAP & R1 & R5 & R10 \\ \hline
0\% & 50.2 & 57.33 & 71.21 & 78.36 & 46.92 & 54.41 & 65.22 & 71.29 & 56.56 & 59.2 & 74.58 & 79.82 \\
10\% & 44.74 & 47.32 & 63.39 & 74.9 & 38.21 & 50.1 & 60.2 & 68.5 & 48.33 & 52.61 & 71.82 & 77.75 \\
20\% & 43.84 & 45.68 & 63 & 73.7 & 34.41 & 42.4 & 54.1 & 60.85 & 48.12 & 51.25 & 71.11 & 77.75 \\ \hline
\end{tabular}
%}
\label{tab:BGG_ReId}
\end{table*}

\begin{table*}[!t]
\caption{Comparison of Person Re-ID performance of Beta-SOD against PurifyNet~\cite{20}, CORE~\cite{22}, and TSNT~\cite{23} on the Market1501 and CUHK03 datasets with various {\bf PATTERN NOISE} levels (5 runs Average and Standard Deviation)}
\resizebox{\textwidth}{!}{%
\centering
\begin{tabular}{c|c|cccc|cccc}
\hline
\multirow{2}{*}{Noise} & \multirow{2}{*}{Methods} & \multicolumn{4}{c|}{Market1501} & \multicolumn{4}{c}{CUHK03} \\ \cline{3-10} 
 &  & mAP & R1 & R5 & R10 & mAP & R1 & R5 & R10 \\ \hline
\multirow{4}{*}{10\%} & PurifyNet & 63.2 & 81.8 & 92.6 & 95.4 & 32.9 & 33.6 & 56.4 & 66 \\
 & CORE & 66.4 & 84 & 93.8 & 96.1 & 32.2 & 38.2 & 54.9 & 64.8 \\\cline{2-10}
% & TSNT & - & - & - & - & - & - & - & - \\  
 &Beta-SOD & \textbf{68.57} $\pm$ 0.6 & \textbf{87.21} $\pm$ 0.07 & \textbf{95.23} $\pm$ 0.08 & \textbf{97.22} $\pm$ 0.05 & \textbf{33.92} $\pm$ 0.53 & \textbf{41.65} $\pm$ 0.06 & \textbf{59.29} $\pm$ 0.07 & \textbf{69.66} $\pm$ 0.06 \\ \hline
\multirow{4}{*}{20\%} & PurifyNet & 56.2 & 77.8 & 91.1 & 93.8 & 27.9 & 28.8 & 47.3 & 58.2 \\
 & CORE & 62.8 & 81.9 & 92.7 & 95.5 & 33.2 & 34.6 & 54.6 & 64.9 \\
 & TSNT & 74.8 & 89 & 95.3 & 97.2 & 37.2 & 38 & 57.4 & 68.8 \\ \cline{2-10} 
 & Beta-SOD & \textbf{76.89} $\pm$ 0.92 & \textbf{89.58} $\pm$ 0.08 & \textbf{96.82} $\pm$ 0.05 & \textbf{97.67} $\pm$ 0.04 & \textbf{38.1} $\pm$ 0.82 & \textbf{40.31} $\pm$ 0.07 & \textbf{59.55} $\pm$ 0.07 & \textbf{71.08} $\pm$ 0.05 \\ \hline
\end{tabular}}
{\noindent \small \qquad $^*$We show only the available benchmark results as reported in the corresponding papers}
\label{tab:ReId_pattern}
\end{table*}

\vskip 0.15cm
\noindent {\bf Person Re-ID Results.}
We compare our method with several state-of-the-art approaches: Noise-Aware Person Re-Identification~\cite{21}, DistributionNet~\cite{19}, PurifyNet~\cite{20}, CORE~\cite{22}, URNet~\cite{URNet}, and TSTN~\cite{23}. Tab.~\ref{table:person_ReId} presents the comparison under three different noise ratios on CUHK03 and Market1501 datasets. The results demonstrate that the proposed method consistently outperforms the other state-of-the-art methods across the three noise levels. As shown in the table, while the performance of Re-ID deteriorates significantly as the noise level increases from to 30\%, the proposed method maintains comparatively superior performance. We can also observe the relative contribution of the OD module by comparing with the same Siamese architecture without OD. In particular, on the clean dataset the performance is very similar (especially taking into account the standard deviation measured across the 5 re-runs). With added noise, the use of the OD module results in a very significant performance gain.

DistributionNet, originally designed for noisy image classification, filters out noisy samples but can discard significant data when noise exceeds 20\%, harming performance, especially on CUHK03 with limited images per identity. In contrast, Re-ID-specific methods like PurifyNet, CORE and TSNT down-weight noisy samples instead of removing them. However, this still suppresses useful data and degrades performance as noise increases.
Similarly to DistributionNet, our approach adopts a (hard) filtering strategy, adapted for a pair-based setting. Since each image forms multiple cross-view pairs, filtering out noisy pairs still retains sufficient representative samples per identity. This redundancy preserves data diversity and maintains model generalization.

Tab.~\ref{tab:BGG_ReId} demonstrates how the choice of the OD mixture distribution impacts the Re-ID task performance on the CUHK03 dataset at varying random noise levels 0-20\%. Across all noise settings, the Beta distribution consistently outperforms both Gaussian and Gamma models in terms of mAP and Rank-k accuracy. As the noise level increases to 20\%, the Beta maintains its superiority achieving an mAP of 48.12\%, compared to 34-44\% for Gamma and Gaussian mixtures. These results confirm the earlier observation based on Tab.~\ref{tab:BGG_outlier} that the Beta model provides a more appropriate fit for cosine similarities.

Tab.~\ref{tab:ReId_pattern} compares the Re-ID performance of Beta-SOD on the Market1501 and CUHK03 datasets under pattern noise at 10-20\%. 
At 20\% noise level, the relative improvement achieved by Beta-SOD becomes more pronounced. On Market1501, our approach attains 76.89\% mAP and 89.58\% Rank-1 accuracy, surpassing the second best, TSNT, by 2\% mAP and 0.5\% Rank-1, while also maintaining consistently higher Rank-5 and Rank-10 results. Similarly, on the more challenging CUHK03 dateset, the relative improvement over TSNT is of 0.9\% mAP and 2.3\% Rank-1. To provide a measure of statistical relevance, we also report the standard deviations available for our experiments (over 5 re-runs), which confirm the meaningful nature of these improvements.
Overall, the results confirm the robustness of our OD-based framework, which performs well in presence of both random (realistic setting) and pattern (adversarial setting) noise.

\vskip 0.15cm
\noindent 
\textbf{Mislabeled examples.} Fig.~\ref{fig:fp_fn} presents examples of image pairs incorrectly handled by the OD method during denoising. The first row shows false positives, i.e., similar image pairs that were wrongly filtered out by the OD method, with cosine similarity scores below 0.8. The second row shows false negatives, i.e., noisy image pairs that were mistakenly retained as clean, with cosine similarities above 0.6. These examples illustrate the difficulty of distinguishing visually similar pairs, where both under-filtering (false negatives) and over-filtering (false positives) compromise the quality of the denoised dataset. These and conceptually similar challenging examples are treated incorrectly by the OD module even at 0\% noise, see in Tab~\ref{tab:denoising_results}. At the same time, the overall impact achieved by the OD-based filtering in the person Re-ID task ranges from a modest performance reduction of 0.5-1\% mAP at 0\% noise to substantial improvement of 21-31\% mAP at 30\% noise levels, see Tab~\ref{table:person_ReId}.

\begin{figure*}[!t]
    \centering
    \includegraphics[scale=0.44]{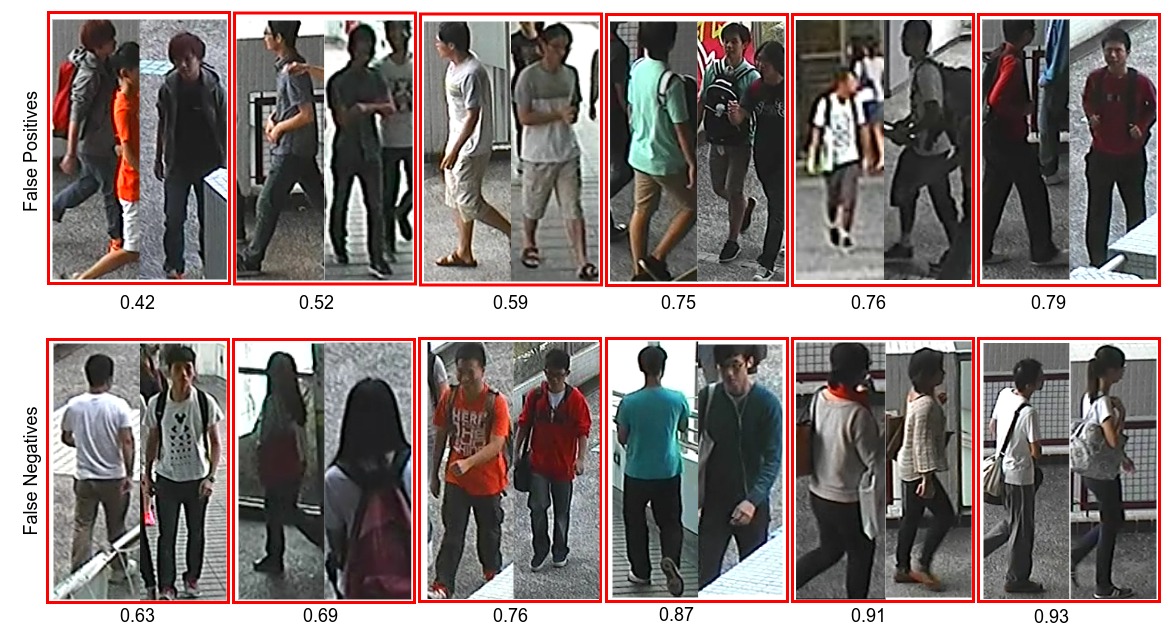}
    \vspace{-\baselineskip}
    \caption{Examples of incorrectly filtered image pairs by Beta-SOD on CUHK03 with 20\% noise: false positives, i.e. true class 1 flagged as 0 (top row), and false negatives, i.e. true class 0 flagged as 1 (bottom row). The numbers are cosine similarities: below 0.8 for false positives and above 0.6 for false negatives.}
    \label{fig:fp_fn}
\end{figure*}

\vskip 0.15cm
\noindent {\bf Vehicle Re-ID Results.}
We now evaluate Beta-SOD on vehicle Re-ID using VeRi-776 dataset. As shown in Tab.~\ref{table:Vehicle_ReId}, our method achieves high mAP and R1/R5 scores on clean dataset (0\% noise), only behind that of specialized~\cite{GLSD,LSKA-ReID}. Interestingly, the configuration without the OD module attains the level not far behind the top performer without attention mechanism incorporated. As established in the de-noising comparisons, without the presence of noise, the OD module removes more challenging training examples which reduces the test performance. This results in a drop of mAP by 1.2\%.

To evaluate noisy dataset performance of, random label noise is introduced into the vehicle Re-ID dataset. Tab.~\ref{table:Vehicle_ReId} reports performance with and without the Beta-SOD. On the clean dataset, the model achieves 83\% mAP, 97.22\% Rank-1, and 98.1\% Rank-5 accuracy. With added noise, performance without the Beta-SOD filtering drops significantly, but is effectively handled with filtering. No benchmark methods report their performance in the presence of noisy labels.

These results confirm the ability of the proposed technique to generalize to another object Re-ID task without modifications.

\begin{table}[t]
\caption{Comparison of Vehicle Re-ID performance on the VeRi-776 dataset with various random noise levels (5 runs Average)}
\resizebox{\columnwidth}{!}{%
\begin{tabular}{c|l|ccc}
\hline
Noise & \multicolumn{1}{c|}{Methods} & mAP & R1 & R5 \\ \hline
\multirow{9}{*}{0\%} & GS-TRE ~\cite{31} & 59.47 & 96.24 & 98.97 \\
 & STE-VReID ~\cite{34} & 67.55 & 90.23 & 96.42 \\
 & GiT ~\cite{git} & 80.3 & 96.9 & - \\
 & SGFD ~\cite{33} & 81 & 96.7 & 98.6 \\
% & VehicleNet & 83.41 & 96.78 & - \\
 & GLSD ~\cite{GLSD} & 83.5 & 97.3 & 98.6 \\
 & LSKA-ReID~\cite{LSKA-ReID} & \textbf{85.64} & 97.97 & \textbf{99.04} \\ \cline{2-5}
 & Siamese without OD & 84.2 & \textbf{98.13} & 98.8 \\
 & Beta-SOD & 83 & 97.22 & 98.1 \\ \hline\hline
\multirow{2}{*}{10\%} & Siamese without OD & 56.25 & 88.12 & 91.45 \\
 & Beta-SOD & \textbf{78} & \textbf{97} & \textbf{98} \\ \hline
\multirow{2}{*}{20\%} & Siamese without OD & 52.22 & 84.92 & 90.11 \\
 & Beta-SOD & \textbf{76.26} & \textbf{96.13} & \textbf{97.62} \\ \hline
\multirow{2}{*}{30\%} & Siamese without OD & 49.41 & 83.52 & 88.21 \\
 & Beta-SOD & \textbf{74.21} & \textbf{95.93} & \textbf{97.54} \\ \hline
\end{tabular}}
\label{table:Vehicle_ReId}
\end{table}

\vskip 0.15cm
\noindent {\bf Ablation Analysis.}
Finally, we investigate the impact of the considered loss functions on the model performance when using the CUHK03 dataset. This involves a comparative analysis between a basic classification loss and similarity-based constraints. The experimental results are presented in Tab.~\ref{table:ablation}.

We establish a baseline Siamese architecture comprising the FCNN trained with the $(\mathcal{L}_{CE})$ loss. Using only $\mathcal{L}_{CE}$ our model achieves a mAP of 47.41\% on the clean dataset and 36.12\% in the presence of 20\% label noise.
Adding $\mathcal{L}_{cos}$ to $\mathcal{L}_{CE}$ improves the mAP by just over 4\% on both the clean and the 20\% noisy datasets. Similarly, combining $\mathcal{L}_{c}$ with $\mathcal{L}_{CE}$ results in a mAP improvement of about 3-5\% across both settings. Finally, by combining all three loss functions, as described in Eq.~\ref{eq:combined_loss}, the mAP improves by an additional 2\%.

\begin{table}[t]
\caption{Loss Function Selection on CUHK03 Dataset with random noise level of 0\% and 20\% (5 runs average)}
%\scriptsize   % or \footnotesize
\resizebox{\columnwidth}{!}{%
\begin{tabular}{ccc|ccc|ccc}
\hline
\multicolumn{3}{c|}{Loss Functions} & \multicolumn{3}{c|}{0\% noise} & \multicolumn{3}{c}{20\% noise} \\ \hline
$\mathcal{L}_{CE}$ & $\mathcal{L}_{COS}$ & $\mathcal{L}_{C}$ & mAP & R1 & R5 & mAP & R1 & R5 \\ \hline
\ding{51}  &  &  & 47.41 & 50.9 & 63.13 & 36.12 & 40.3 & 48.21 \\
\ding{51}  &\ding{51}  &  & 51.82 & 55.3 & 70.22 & 40.2 & 45.01 & 58.3 \\
\ding{51}  &  &\ding{51}  & 54.72 & 57.1 & 73.1 & 45.8 & 50.13 & 68.83 \\
\ding{51}  &\ding{51}  &\ding{51}  & 56.52 & 59.2 & 74.58 & 48.12 & 51.25 & 71.11 \\ \hline
\end{tabular}}
\label{table:ablation}
\end{table}

\section{Conclusion}
We propose a robust object Re-ID approach that integrates a statistical OD framework into training. Our Beta-SOD framework models cosine similarity distributions with a two-component Beta mixture to iteratively filter noisy labels, enabling unsupervised task-specific dataset de-noising. A Siamese network is then trained using combined binary cross-entropy, contrastive, and cosine embedding losses to enhance both classification and feature similarity. Experiments show improved robustness and accuracy across Re-ID benchmarks under varying noise levels. The observed performance gains strongly suggest the effectiveness and generalization capacity of Beta-SOD in real-world noisy label environments.

We present a novel theoretical result demonstrating that the unique identifiability of two-component Beta mixtures is attainable with probability 1. 
Based on the strong performance observed in Re-ID tasks, we expect Beta-mixture modeling to be a versatile and effective tool for de-noising cosine similarity scores produced by Siamese architectures, with potential applications extending to face recognition, speaker verification, and sentence embedding models.

While our statistical OD strategy proved effective as a standalone denoising module, future work will explore integrating this mechanism more tightly into the training neural network pipeline. This can be achieved by embedding the EM-based outlier scores directly into the loss function by re-weighting each training pair's similarity contribution based on its clean class probability $\gamma$ (soft-filtering). 
An additional extension would be to integrate an attention mechanism into the Siamese architecture, allowing the model to emphasize the most discriminative features within paired images. This could lead to more accurate estimation of cosine similarities, thereby enhancing the effectiveness and efficiency of the similarity-based OD.

\bibliographystyle{IEEEtran}
\bibliography{egbib.bib}
\end{document}